\documentclass{article}

\usepackage[accepted]{icml2026}

\usepackage{amsmath,amssymb,amsthm}
\usepackage{mathtools}
\usepackage{bbm}
\usepackage{bm}
\usepackage{hyperref}
\usepackage{cleveref}
\usepackage{booktabs}
\usepackage{microtype}
\usepackage{graphicx}
\usepackage{xcolor}
\usepackage{enumitem}
\usepackage[ruled,linesnumbered,algo2e]{algorithm2e}

\hypersetup{colorlinks=true,
  linkcolor=blue!70!black,citecolor=blue!50!black,urlcolor=blue!60!black}
  
\providecommand{\E}{\mathbb{E}}
\providecommand{\Prob}{\mathbb{P}}
\providecommand{\defeq}{\coloneqq}

\graphicspath{{figures/}{../figures/}{./}}
\Crefname{figure}{Fig.}{Figs.}
\crefname{figure}{Fig.}{Figs.}

\newtheorem{lemma}{Lemma}
\newtheorem{proposition}{Proposition}

\newtheorem{assumption}{Assumption}
\theoremstyle{definition}

\newtheorem{remark}{Remark}

\newcommand{\cN}{\mathcal{N}}

\newcommand{\cW}{\mathcal{W}}

\newcommand{\Dtr}{\mathcal{D}_{\mathrm{tr}}} 
\newcommand{\qs}{q_s}                        
\newcommand{\qt}{q_t}                        
\newcommand{\ps}{p_s}                        
\newcommand{\pt}{\widetilde{p}_{w}}   

\newcommand{\ws}{w}
\newcommand{\Zx}[1][x]{Z_w(#1)}
\newcommand{\NLPD}{\mathrm{NLPD}}
\newcommand{\KL}{D_{\mathrm{KL}}}
\newcommand{\ei}{e_i}

\newcommand{\Fil}{\mathcal{F}}
\newcommand{\iid}{\overset{\mathrm{iid}}{\sim}}

\icmltitlerunning{Anytime-Valid Confirmation of Label-Shift Corrections}

\begin{document}

\twocolumn[
\icmltitle{Anytime-Valid Confirmation of Label-Shift Corrections}

\begin{icmlauthorlist}
\icmlauthor{Seungjin Choi}{croid}
\end{icmlauthorlist}

\icmlaffiliation{croid}{CROID Research and aSSIST University, Seoul, Korea}
\icmlcorrespondingauthor{Seungjin Choi}{seungjin.choi.mlg@gmail.com}

\icmlkeywords{e-values, e-processes, label shift, sequential testing,
              anytime-valid inference, predictive confirmation, Gaussian processes}
\vskip 0.3in
]

\printAffiliationsAndNotice{}

\begin{abstract}
In small-batch scientific deployments, labeled target outcomes may be
too scarce for reliable shift estimation even when unlabeled target
inputs are available.  We address the complementary setting where
the practitioner has a pre-specified label-shift correction from domain
knowledge and asks whether incoming labeled outcomes support it.
We show that the per-observation likelihood ratio between a
label-shift-corrected predictive and the source predictive is a
conditional e-value, so its running product is a nonnegative martingale
and Ville's inequality yields an anytime-valid confirmation rule.  
The log martingale equals the cumulative negative log-predictive density (NLPD) gap 
between the source and the corrected predictive, converting routine model monitoring into a formal sequential test.
Rejection means the incoming data support the posited correction relative to the source predictive, 
but it is not a precise estimate of the degree of shift.
Closed forms are available for GP sources with Gaussian label-shift ratios.  
GP regression simulations validate Type I control, finite-sample power,
miscalibration sensitivity, and the small-batch advantage of a reliable
prior over label-based re-estimation.
\end{abstract}

\section{Introduction}
\label{sec:intro}
 
When deploying a predictive model in a new environment, the label
distribution may shift between source and target domains,
a phenomenon known as \emph{label shift}, where the conditional distribution
$p(x\,|\,y)$ is stable but the marginal $p(y)$ changes across domains.  
Label shift arises broadly, from clinical deployment across hospital sites to scientific instrument migration 
and manufacturing process changes, wherever the frequency of outcomes differs between training and test conditions.
 
Classical label-shift adaptation is usually framed as an estimation
problem.  In identifiable settings, most notably classification with a
well-conditioned source predictor, one can use sufficient \emph{unlabeled}
target inputs to estimate the shift ratio
$\ws(y)=p_t(y)/p_s(y)$ without target labels and then retilt the source
predictive \citep{LiptonZ2018icml,AlexandariAM2020icml,GargS2020neurips}.
This is the right tool when unlabeled target data are available in sufficient
quantity and the estimation problem is well conditioned.
 
The setting considered in this paper is different.  In many deployments,
\emph{labeled} target outcomes arrive sequentially and may be scarce,
unlabeled target inputs alone may be insufficient to estimate the
shift reliably, and practitioners already have an externally specified
candidate correction, such as a regulatory specification, a bridging
experiment, or domain knowledge about the direction and approximate
degree of the shift.  The question then changes from
\emph{``what is the shift?''} to \emph{``is the proposed correction
supported by the incoming labeled outcomes?''}  This is a confirmation
problem, not a shift-estimation problem, and it calls for a different
tool.
 
This paper provides that tool.  Given a source predictive
$\ps(y\,|\,x,\Dtr)$ and a candidate weight $\ws(y)$ derived from domain
knowledge, we form the tilted predictive
$\pt(y\,|\,x,\Dtr)\propto\ws(y)\,\ps(y\,|\,x,\Dtr)$ and accumulate the
per-observation likelihood ratio between $\pt$ and $\ps$.  We show this
ratio is a conditional e-value under the predictive null, and its running
product is a nonnegative martingale, also known as an e-process
\citep{VovkV2021aos,ShaferG2021jrsssa,RamdasA2023ss}.  Ville's inequality
then gives an anytime-valid confirmation rule in which Type I error is
controlled at every sample size simultaneously, with no correction for
optional stopping or repeated looks at the data.
 
One point deserves emphasis.  Rejection at time $\tau^*$ means the
cumulative negative log-predictive density (NLPD) gap has exceeded $\log(1/\alpha)$, which by Ville's
inequality has probability at most $\alpha$ under $H_0^{\rm pred}$.
This confirms the posited correction is consistent with the incoming
data, but does not quantify the degree of shift.
For quantitative shift estimation, unlabeled-target methods such as
BBSE \citep{LiptonZ2018icml} remain the appropriate tool.
 
\if(0)
The key structural result (\eqref{eq:nlpdgap}) makes this practically
transparent because $\log M_t$ equals the cumulative NLPD improvement of $\pt$
over $\ps$, a quantity practitioners already compute for model monitoring.
The only additional step is comparing it to $\log(1/\alpha)$.
\fi
 
This paper contributes (i)~the NLPD-gap identity connecting
Kelly-optimal betting \citep{KellyJL56bell} to model evaluation;
(ii)~an anytime-valid confirmation test with no pre-specified sample
size; (iii)~closed-form GP e-values under Gaussian label-shift tilts;
(iv)~a growth-rate characterization giving asymptotic power one under
correct specification; and (v)~empirical demonstrations of Type I
control, miscalibration sensitivity, small-batch advantage over noisy
label-based re-estimation, and robustness to moderate shift
misspecification.
 
The theory of e-values and e-processes is developed in
\citep{VovkV2021aos,ShaferG2021jrsssa,RamdasA2023ss}.
Growth-rate-optimal (GRO) e-values are characterized by \citet{GrunwaldP2024jrsssb}.
Our construction deliberately uses a fixed domain-knowledge-based weight rather than a GRO weight,
which would require specifying the alternative more precisely.  
The same predictive-level tilting construction was used in our companion
work on conformal Bayes under label shift \citep{Choi2026eiml}, where the tilted
predictive determines the nonconformity score geometry and is paired with an
importance-weighted conformal quantile for target-domain coverage.  

In this paper, the same tilted/source predictive pair serves a different purpose.
The normalized ratio itself becomes a sequential e-value, yielding anytime-valid
confirmation of a fixed posited shift rather than conformal prediction sets.
Our test differs from change-point detectors
\citep{VovkV2005book,XuC2023icml} and weighted conformal prediction
\citep{TibshiraniR2019neurips,PodkopaevA2021uai}, which test an unknown
alternative or provide coverage guarantees rather than anytime-valid
confirmation of a fixed posited shift.

\section{Background and Notation}
\label{sec:background}

\subsection{E-Values, E-Processes, and Ville's Inequality}

Let $\Fil_t=\sigma(\Dtr,X_1,Y_1,\ldots,X_t,Y_t)$ denote the natural
filtration (all information up to time $t$).  A nonnegative random
variable $e$ is an \emph{e-value} for $H_0$ if $\E_{H_0}[e]\le1$.
E-values compose sequentially.  If $\E_{H_0}[e_i\,|\,\Fil_{i-1}]\le1$
for each $i$, then $M_t=\prod_{i=1}^t e_i$ is a nonnegative
supermartingale (an \emph{e-process}) under $H_0$
\citep{RamdasA2023ss}; when equality holds it is a martingale.
Ville's inequality gives
\[
  \Prob_{H_0}\!\left(\sup_{t\ge0}M_t>\tfrac{1}{\alpha}\right)\le\alpha,
\]
so the stopping rule $\tau^*=\inf\{t:M_t>1/\alpha\}$ controls Type I
error at level $\alpha$ at every sample size simultaneously.  This is the
\emph{anytime-valid} guarantee.  The simplest e-value for testing
null $Y_i\,|\,X_i\sim p_0(\cdot\,|\,x_i)$ against alternative
$Y_i\,|\,X_i\sim p_1(\cdot\,|\,x_i)$ is the likelihood ratio
$e_i=p_1(Y_i\,|\,X_i)/p_0(Y_i\,|\,X_i)$, since
$\E_{p_0}[e_i\,|\,X_i,\Fil_{i-1}]=1$.

\subsection{Predictive Null, Tilted Predictive, and NLPD}

We test the \emph{predictive null}
\begin{equation}
  H_0^{\rm pred}:\;Y_i\,|\,X_i,\Fil_{i-1}\sim\ps(\cdot\,|\,X_i,\Dtr),
  \label{eq:prednull}
\end{equation}
where $\ps(y\,|\,x,\Dtr)$ is the source posterior predictive, trained on
$\Dtr$ and held fixed.  This differs from the data-level no-shift null
$H_0^{\rm data}:\qt=\qs$; the two coincide only when $\ps$ is
well-calibrated for $\qs$.

Under label shift $\qs(x\,|\,y)=\qt(x\,|\,y)$, so the density ratio
$\qt/\qs=\qt(y)/\qs(y)\eqqcolon\ws_{\rm true}(y)$ depends only on $y$.
Given a weight $\ws(y)$ from domain knowledge, the \emph{tilted predictive} is given by
\begin{equation}
  \pt(y\,|\,x,\Dtr)\defeq\frac{\ws(y)\,\ps(y\,|\,x,\Dtr)}{\Zx},
  \label{eq:tilted}
\end{equation}
where $\Zx=\int\ws(y)\,\ps(y\,|\,x,\Dtr)\,dy$.
The per-observation e-value is then
\[
  e_i=\pt(y_i\,|\,x_i,\Dtr)/\ps(y_i\,|\,x_i,\Dtr)=\ws(y_i)/\Zx[x_i].
\]
The \emph{negative log-predictive density} (NLPD) of predictive $p$ at
$(x_i,y_i)$ is $\NLPD_i(p)\defeq-\log p(y_i\,|\,x_i,\Dtr)$.  With
$\NLPD_i^s\defeq\NLPD_i(\ps)$ and $\NLPD_i^w\defeq\NLPD_i(\pt)$
(superscript $w$ for posited weight, not time), $\log e_i=\NLPD_i^s-\NLPD_i^w$,
so the log-wealth is the cumulative NLPD gap
\begin{equation}
  \log M_t=\sum_{i=1}^t(\NLPD_i^s-\NLPD_i^w).
  \label{eq:nlpdgap}
\end{equation}

\section{From Predictive Tilting to Anytime-Valid E-Processes}
\label{sec:main}

\subsection{E-Value and E-Process Identification}
\label{sec:evalue}

Throughout this subsection the candidate weight $\ws$ is fixed before the
stream begins, $\Zx[x]=\int \ws(y)\ps(y\,|\,x,\Dtr)dy$ is finite and
positive, and
\[
  \pt(y\,|\,x,\Dtr)=\frac{\ws(y)\ps(y\,|\,x,\Dtr)}{\Zx[x]}.
\]
The inputs $X_i$ may be stochastic, deterministic, or adaptively selected;
what matters for validity is the conditional distribution of $Y_i$ under the
predictive null.

\begin{lemma}[Tilted predictive as likelihood ratio]
  \label{lem:lr}
  For every $x$ with $0<\Zx[x]<\infty$,
  \[
    \frac{\pt(y\,|\,x,\Dtr)}{\ps(y\,|\,x,\Dtr)}=\frac{\ws(y)}{\Zx[x]},
    \qquad
    \E_{Y\sim\ps(\cdot\,|\,x,\Dtr)}
    \left[\frac{\ws(Y)}{\Zx[x]}\right]=1.
  \]
\end{lemma}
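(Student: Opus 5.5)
The plan is to prove the two claims of Lemma~\ref{lem:lr} separately. Both follow directly from the definition \eqref{eq:tilted} of the tilted predictive and from the definition of the normalizer $\Zx[x]$.

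\textbf{First identity (the ratio).} Fix $x$ with $0<\Zx[x]<\infty$. On the set $\{y:\ps(y\,|\,x,\Dtr)>0\}$, divide \eqref{eq:tilted} by $\ps(y\,|\,x,\Dtr)$. The source density cancels and leaves $\ws(y)/\Zx[x]$. This division is legitimate because $\Zx[x]$ is finite and positive.

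Off the support of $\ps$, the tilted density $\pt$ also vanishes, since it is a multiple of $\ps$. The ratio is therefore irrelevant there; we adopt the convention that it equals $\ws(y)/\Zx[x]$, or note that the set has $\ps$-probability zero. This also shows that $\pt$ is absolutely continuous with respect to $\ps$, so the ratio is a genuine Radon--Nikodym derivative.

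\textbf{Second identity (the expectation).} Write the expectation as an integral against the source density:
\[
\E_{Y\sim\ps(\cdot\,|\,x,\Dtr)}\!\left[\frac{\ws(Y)}{\Zx[x]}\right]=\frac{1}{\Zx[x]}\int \ws(y)\,\ps(y\,|\,x,\Dtr)\,dy .
\]
Pulling the constant $1/\Zx[x]$ outside the integral is allowed because it does not depend on $y$. The remaining integral is exactly $\Zx[x]$ by definition, so the ratio equals $1$.

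Equivalently, this expectation is $\int \pt(y\,|\,x,\Dtr)\,dy=1$, since $\pt$ is a normalized density.

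\textbf{Main obstacle.} The only real subtlety is measure-theoretic. The weight $\ws$ is nonnegative, so the integral defining $\Zx[x]$ is well defined in $[0,\infty]$, and the hypothesis $0<\Zx[x]<\infty$ makes the random variable $\ws(Y)/\Zx[x]$ integrable with finite mean. No Fubini-type or limit-interchange argument is needed.

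I will also remark that the argument uses nothing about how $x$ was generated. Hence the same identity holds conditionally on $X_i$ and $\Fil_{i-1}$ under $H_0^{\rm pred}$. This is what will later give the martingale property of $M_t$.
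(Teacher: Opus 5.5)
Your proposal is correct and takes essentially the same route as the paper. The paper likewise identifies $w(y)/Z_w(x)$ as the Radon--Nikodym derivative of the tilted measure with respect to the source predictive (starting from the measure-level identity rather than cancelling densities), and then obtains the expectation as $Z_w(x)^{-1}\int w(y)\,p_s(y\mid x,\mathcal{D}_{\mathrm{tr}})\,dy=1$ directly from the definition of the normalizer.
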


\emph{Proof.} See \cref{app:proof-lem-lr}.

\begin{proposition}[Per-sample predictive e-value]
  \label{prop:evalue}
  Define
  \[
    e_i\defeq
    \frac{\pt(Y_i\,|\,X_i,\Dtr)}{\ps(Y_i\,|\,X_i,\Dtr)}
    =\frac{\ws(Y_i)}{\Zx[X_i]}.
  \]
  Under $H_0^{\rm pred}$ in \eqref{eq:prednull},
  \[
    \E[e_i\,|\, \Fil_{i-1},X_i]=1.
  \]
  Hence $e_i$ is an e-value conditionally on the realized input $X_i$.
\end{proposition}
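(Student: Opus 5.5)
The plan is to reduce the conditional expectation to the unconditional identity in \cref{lem:lr} by conditioning on $(\Fil_{i-1},X_i)$, under which $X_i$ is a fixed value and the conditional law of $Y_i$ is specified by $H_0^{\rm pred}$.

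First, I note that $\Dtr\subseteq\Fil_{i-1}$ and the weight $\ws$ is fixed before the stream starts. Hence the normalizer $\Zx[X_i]=\int\ws(y)\ps(y\,|\,X_i,\Dtr)\,dy$ is a measurable function of $(\Fil_{i-1},X_i)$. By the standing assumption of this subsection it is finite and positive, so $e_i=\ws(Y_i)/\Zx[X_i]$ is a well-defined nonnegative random variable. Here I use $\ws\ge0$, as implicit in the tilted predictive being a density.

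Second, I write the conditional expectation as an integral against the regular conditional distribution of $Y_i$ given $(\Fil_{i-1},X_i)$. Under \eqref{eq:prednull} this distribution is $\ps(\cdot\,|\,X_i,\Dtr)$. Since $1/\Zx[X_i]$ is measurable with respect to the conditioning $\sigma$-field, it factors out, giving
\[
\E[e_i\,|\,\Fil_{i-1},X_i]=\frac{1}{\Zx[X_i]}\int\ws(y)\,\ps(y\,|\,X_i,\Dtr)\,dy .
\]
Evaluated at the realized value $x=X_i$, this is exactly the second identity in \cref{lem:lr}, so it equals $1$ almost surely. The equivalence of $e_i$ with the density ratio $\pt/\ps$ is the first identity in \cref{lem:lr}, holding wherever $\ps(Y_i\,|\,X_i,\Dtr)>0$, which is almost surely under the null.

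The only delicate point is measure-theoretic: justifying the substitution of the realized $X_i$ into the integral when $X_i$ may be chosen adaptively from $\Fil_{i-1}$. I would handle this with the standard "freezing" (disintegration) lemma. If $g(x,y)=\ws(y)/\Zx[x]$ is jointly measurable and nonnegative, and the conditional law of $Y_i$ given $\mathcal{G}=\sigma(\Fil_{i-1},X_i)$ is the kernel $\ps(\cdot\,|\,X_i,\Dtr)$, then $\E[g(X_i,Y_i)\,|\,\mathcal{G}]=\int g(X_i,y)\,\ps(y\,|\,X_i,\Dtr)\,dy$. Nonnegativity lets me use Tonelli with no integrability caveat. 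The final claim follows immediately, since conditional mean $1$ given the realized input means $e_i$ is a conditional e-value.
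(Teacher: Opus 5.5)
Your proposal is correct and matches the paper's proof: condition on $(\Fil_{i-1},X_i)$, use $H_0^{\rm pred}$ to identify the conditional law of $Y_i$ as $\ps(\cdot\,|\,X_i,\Dtr)$, and apply the normalization identity of \cref{lem:lr}. The extra points you spell out, namely measurability of $\Zx[X_i]$, nonnegativity of $\ws$, and the freezing/disintegration step for adaptively chosen inputs, make explicit what the paper's short argument leaves implicit.
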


\emph{Proof.} See \cref{app:proof-prop-evalue}.

\begin{proposition}[Prequential predictive e-process]
  \label{prop:eprocess}
  Assume $H_0^{\rm pred}$ and define $M_0=1$,
  \[
    M_t\defeq\prod_{i=1}^{t}e_i.
  \]
  Then $(M_t)_{t\ge0}$ is a nonnegative martingale with respect to
  $(\Fil_t)_{t\ge0}$.  In particular, it is an e-process under the
  predictive null.
\end{proposition}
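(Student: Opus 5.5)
The plan is to check the three defining properties of a martingale for $(M_t)$ with respect to $(\Fil_t)$, namely adaptedness, integrability, and the martingale identity. Everything else will be inherited from \cref{prop:evalue}, combined with the tower property.

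\emph{Adaptedness and nonnegativity.} Each $e_i=\ws(Y_i)/\Zx[X_i]$ is a measurable function of $(X_i,Y_i)$, and $(X_i,Y_i)$ is $\Fil_i$-measurable because $\Fil_i=\sigma(\Dtr,X_1,Y_1,\ldots,X_i,Y_i)$. Here we use that $\ws$ is fixed before the stream begins, and that $\ps$ depends only on $\Dtr\in\Fil_0$. Hence $M_t$ is $\Fil_t$-measurable. It is nonnegative because $\ws\ge0$ (it is a density ratio) and $0<\Zx[x]<\infty$ by the standing assumption. Note that $M_0=1$ is trivially $\Fil_0$-measurable.

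\emph{Martingale step.} Since $M_{t-1}$ is $\Fil_{t-1}$-measurable and nonnegative, I will compute
\[
\E[M_t\,|\,\Fil_{t-1}]=M_{t-1}\,\E[e_t\,|\,\Fil_{t-1}]
=M_{t-1}\,\E\bigl[\E[e_t\,|\,\Fil_{t-1},X_t]\,\big|\,\Fil_{t-1}\bigr]=M_{t-1}.
\]
The last equality uses \cref{prop:evalue}. Pulling out the known factor is legitimate for nonnegative random variables even before integrability is established, because conditional expectations of nonnegative variables are always defined in $[0,\infty]$. The inner conditioning on $X_t$ is what allows adaptively chosen inputs: $\sigma(\Fil_{t-1},X_t)\supseteq\Fil_{t-1}$, so the tower property applies regardless of how $X_t$ was selected.

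\emph{Integrability.} Integrability then follows by induction. Taking expectations of the step identity gives $\E[M_t]=\E[M_{t-1}]=\cdots=\E[M_0]=1<\infty$. So each $M_t\in L^1$, and $(M_t)$ is a genuine martingale rather than merely a generalized one. Being a nonnegative martingale with $M_0=1$, it is in particular a nonnegative supermartingale with $\E[M_t]\le1$ at every $t$, hence an e-process under $H_0^{\rm pred}$ in the sense recalled in \cref{sec:background}. This is also what makes Ville's inequality applicable.

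The main (mild) obstacle is the measure-theoretic care needed in the martingale step. One point is justifying the factorization before integrability is known, which I handle by working with nonnegative conditional expectations and deducing $L^1$ afterward. The other is making sure that $H_0^{\rm pred}$ specifies the conditional law of $Y_t$ given $\sigma(\Fil_{t-1},X_t)$ rather than just given $X_t$. That point is precisely what \cref{prop:evalue} encodes, so I will invoke it directly rather than re-deriving it from \cref{lem:lr}.
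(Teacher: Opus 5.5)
Your proposal is correct and follows essentially the same route as the paper's proof: establish nonnegativity, obtain $\E[e_t\mid\Fil_{t-1}]=1$ from \cref{prop:evalue} via the tower property, and pull out the $\Fil_{t-1}$-measurable factor $M_{t-1}$. Your extra care about adaptedness, and about deriving $L^1$-integrability from generalized conditional expectations of nonnegative variables, fills in details the paper leaves implicit without changing the argument.
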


\emph{Proof.} See \cref{app:proof-thm-eprocess}.

By \eqref{eq:nlpdgap}, $\log M_t=\sum_{i=1}^t(\NLPD_i^s-\NLPD_i^w)$
for every realized stream.  Crossing $\log(1/\alpha)$ is therefore
exactly the event that the tilted predictive has accumulated more than
$\log(1/\alpha)$ nats of log-score advantage over the source.  The null
assumption enters only when converting this algebraic identity into the
anytime-valid test below.

\begin{proposition}[Anytime-valid predictive confirmation test]
  \label{prop:anytime}
  Under $H_0^{\rm pred}$, for any $\alpha\in(0,1)$ the stopping rule
  \[
    \tau^*\defeq\inf\{t\ge1:M_t>1/\alpha\}
  \]
  satisfies
  \[
    \Prob_{H_0^{\rm pred}}(\tau^*<\infty)\le\alpha.
  \]
\end{proposition}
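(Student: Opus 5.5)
The plan is to reduce the statement directly to Ville's inequality applied to the martingale of \cref{prop:eprocess}. Under $H_0^{\rm pred}$, \cref{prop:eprocess} gives that $(M_t)_{t\ge0}$ is a nonnegative martingale with respect to $(\Fil_t)$ with $M_0=1$, so $\E[M_0]=1$. The event $\{\tau^*<\infty\}$ coincides with $\{\exists\, t\ge1: M_t>1/\alpha\}$, which is contained in $\{\sup_{t\ge0}M_t>1/\alpha\}$. (Since $M_0=1<1/\alpha$ for $\alpha\in(0,1)$, including $t=0$ changes nothing.) Ville's inequality, as stated in the background section, then bounds the probability of this event by $\alpha\,\E[M_0]=\alpha$.

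To be self-contained rather than cite Ville as a black box, I would give the standard optional-stopping proof. Fix a finite horizon $T$ and consider the bounded stopping time $\tau^*\wedge T$. By optional stopping for the nonnegative martingale $M$,
\[
  \E\bigl[M_{\tau^*\wedge T}\bigr]=\E[M_0]=1.
\]
On the event $\{\tau^*\le T\}$ we have $M_{\tau^*\wedge T}=M_{\tau^*}>1/\alpha$. Combining this with nonnegativity gives
\[
  1\ge \E\bigl[M_{\tau^*}\mathbbm{1}\{\tau^*\le T\}\bigr]\ge \Prob(\tau^*\le T)/\alpha,
\]
so $\Prob(\tau^*\le T)\le\alpha$. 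Letting $T\to\infty$ and using continuity from below of probability yields $\Prob(\tau^*<\infty)\le\alpha$.

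The only point needing care is measurability and integrability. $\tau^*$ is an $(\Fil_t)$-stopping time because each $M_t$ is $\Fil_t$-measurable: $e_i=\ws(Y_i)/\Zx[X_i]$ is a function of $(X_i,Y_i)$, and $\ws$ is fixed in advance. Each $M_t$ is integrable with mean one, by iterating \cref{prop:evalue} via the tower property. Adaptive choice of $X_i$ poses no difficulty, since \cref{prop:evalue} conditions on $\Fil_{i-1}$ and $X_i$. I expect no real obstacle beyond this bookkeeping; the substantive work was already done in \cref{prop:eprocess}.
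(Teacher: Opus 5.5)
Your proposal is correct and follows the paper's proof: invoke \cref{prop:eprocess} for the nonnegative martingale with $M_0=1$, note that $\{\tau^*<\infty\}\subseteq\{\sup_{t\ge0}M_t>1/\alpha\}$, and apply Ville's inequality. Your optional-stopping argument on $\tau^*\wedge T$, followed by letting $T\to\infty$, is simply the standard proof of Ville's inequality, which the paper cites directly rather than deriving.
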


\emph{Proof.} See \cref{app:proof-thm-anytime}.

\begin{remark}[What rejection means]
  \label{rem:rejection_meaning}
  Rejecting at time $\tau^*$ means the tilted predictive $\pt$ has
  outpredicted the source predictive $\ps$ by a cumulative log-score
  margin exceeding $\log(1/\alpha)$ nats, an event that occurs with
  probability at most $\alpha$ under $H_0^{\rm pred}$.  This is what is
  directly confirmed.

  When the source predictive $\ps$ is well-calibrated, rejection further
  implies that the data are inconsistent with no shift and consistent with
  the posited tilt $\ws(y)$.  It does \emph{not} imply that $\ws(y)$ is
  the uniquely correct weight, since a different tilt in the same direction
  might also have been confirmed.  Rejection should therefore be read as
  \emph{the posited shift correction is supported by the data}, not as
  \emph{the posited correction is exactly right}.

  This distinction matters practically because if the posited shift is in the
  right direction but the wrong magnitude, the test may still reject,
  and the practitioner should not interpret rejection as a precise
  quantification of the shift.  For that, estimation methods
  \citep{LiptonZ2018icml,AlexandariAM2020icml} are the right tool.
  The confirmation test answers the binary question ``Is the correction plausible?'' rather than a quantitative one.
\end{remark}

\begin{proposition}[Growth rate and asymptotic power]
  \label{prop:power}
  Condition on the fixed training data $\Dtr$.  Suppose
  $(X_i,Y_i)\iid\nu(dx)\,\pt(dy\,|\,x,\Dtr)$, i.e.\ inputs are drawn
  i.i.d.\ from a marginal $\nu$ and, given $X_i=x$, outcomes follow the
  tilted predictive.  If
  \[
    \gamma\defeq
    \E_{X\sim\nu}\! \big[
      \KL \big( \pt(\cdot\,|\,X,\Dtr) \,\|\, \ps(\cdot\,|\,X,\Dtr) \big)
    \big]>0
  \]
  and $\E|\log e_i|<\infty$, then
  \[
    \frac{1}{t}\log M_t\to \gamma\quad\text{a.s.},
    \qquad
    \Prob_{\pt}(\tau^*<\infty)=1.
  \]
\end{proposition}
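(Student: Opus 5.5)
The plan is to use the strong law of large numbers on the i.i.d.\ increments $\log e_i$ and then deduce power one from the linear growth of $\log M_t$. Conditional on $\Dtr$, the pairs $(X_i,Y_i)$ are i.i.d.\ under $\nu(dx)\,\pt(dy\,|\,x,\Dtr)$. Since $\ws$ is fixed before the stream begins, the map $(x,y)\mapsto\log e(x,y)=\log\ws(y)-\log\Zx[x]$ is a fixed measurable function. Hence $\log e_i=\log e(X_i,Y_i)$ are i.i.d.\ real random variables. By \eqref{eq:nlpdgap}, $\log M_t=\sum_{i=1}^t\log e_i$. The hypothesis $\E|\log e_i|<\infty$ is exactly what Kolmogorov's strong law needs, giving $\frac1t\log M_t\to\E[\log e_1]$ almost surely.

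Next I identify the limit. I condition on $X_1=x$ and use \cref{lem:lr}, which gives $\log e_1=\log\big(\pt(Y_1\,|\,x,\Dtr)/\ps(Y_1\,|\,x,\Dtr)\big)$. Here $Y_1\sim\pt(\cdot\,|\,x,\Dtr)$, so the conditional expectation is $\KL\big(\pt(\cdot\,|\,x,\Dtr)\,\|\,\ps(\cdot\,|\,x,\Dtr)\big)$. The tower property then gives $\E[\log e_1]=\E_{X\sim\nu}[\KL(\cdot\|\cdot)]=\gamma$. This step needs Fubini, or the tower property for an integrable variable, and is justified because $\E|\log e_1|<\infty$. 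A minor point is that $\pt\ll\ps$ automatically, since $\pt$ is a reweighting of $\ps$, so the KL is well defined. The points where $\ws(y)=0$ carry zero $\pt$-mass and therefore do not contribute.

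Finally, for power: since $\gamma>0$, on the almost-sure event where $\frac1t\log M_t\to\gamma$ we have $\log M_t\ge t\gamma/2$ for all sufficiently large $t$. In particular $\log M_t\to\infty$. So there is some finite $t$ with $M_t>1/\alpha$, which means $\tau^*<\infty$ on this event, and therefore $\Prob_{\pt}(\tau^*<\infty)=1$.

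There is no real obstacle. The only delicate point is the bookkeeping in the conditioning: everything is conditional on $\Dtr$, so $\ps$ and $\pt$ are fixed kernels. This is what makes the increments genuinely i.i.d.\ rather than merely a martingale-difference sequence. If one preferred to avoid the integrability assumption on the negative part, one could argue via the SLLN for variables with $\E[\log e_1]^- <\infty$. As stated, however, the assumption $\E|\log e_i|<\infty$ suffices directly.
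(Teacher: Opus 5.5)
Your proof is correct and follows the paper's argument essentially step for step. The steps are the same: the increments $\log e_i$ are i.i.d.\ and integrable, conditioning on $X_1$ and using \cref{lem:lr} identifies $\E[\log e_1]$ as the $\nu$-averaged KL divergence $\gamma$, the strong law gives $\frac{1}{t}\log M_t\to\gamma$, and $\gamma>0$ forces $\log M_t\to\infty$, so the threshold $\log(1/\alpha)$ is crossed almost surely. Your remarks on $\pt\ll\ps$ and on justifying the tower step via $\E|\log e_1|<\infty$ are details the paper leaves implicit.
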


\emph{Proof.} See \cref{app:proof-prop-power}.

Proposition \ref{prop:power} has two complementary consequences.
First, the growth rate $\gamma$ equals the
average KL divergence between the tilted and source predictives.  Since
$\KL(\pt(\cdot\,|\,X,\Dtr)\|\ps(\cdot\,|\,X,\Dtr))\ge0$, the growth rate
is positive whenever $\pt(\cdot\,|\,X,\Dtr)\ne \ps(\cdot\,|\,X,\Dtr)$
on a set of positive $\nu$-measure.  In that case $\Prob_{\pt}(\tau^*<\infty)=1$,
so the test eventually rejects $H_0^{\rm pred}$ with probability one.
Second, $\gamma$ quantifies the speed.  A larger shift (higher KL) gives
faster rejection, with expected stopping time approximately
$\log(1/\alpha)/\gamma$.  
This tells the practitioner how many target observations to expect before confirmation, 
purely from the posited shift and the source predictive, with no target data required.
Together, Proposition~\ref{prop:anytime} (the test is safe) and
Proposition~\ref{prop:power} (the test is powerful, and quantifiably so
under correct specification) give a model-based frequentist
characterization of the confirmation procedure.

\begin{remark}[What happens under misspecification?]
  \label{rem:misspec}
  If the true conditional distribution is $q_t(\cdot\,|\,x)$ rather than $\pt$ or
  $\ps$, the expected log-growth is the cross-entropy gap
  \[
    \E_{q_t}\left[
      \log\pt(Y\,|\,X,\Dtr)-\log\ps(Y\,|\,X,\Dtr)
    \right].
  \]
  This quantity is positive exactly when the tilted predictive improves the
  expected log score relative to the source predictive.  However, exact
  Type I control under the data-level no-shift null $\qt=\qs$ is guaranteed
  only if $q_s(Y\,|\,X)=\ps(Y\,|\,X,\Dtr)$, or more generally if
  $\E_{q_s}[e_i\,|\, X_i,\Fil_{i-1}]\le1$.  Thus the present test is
  safe relative to a fixed predictive null; robustness to arbitrary
  predictive misspecification would require an additional calibration or
  conformalization layer.
\end{remark}

\subsection{Closed-Form E-Values for GP Sources}
\label{sec:closedform}

\begin{assumption}[GP source and Gaussian posited tilt]
  \label{ass:gp}
  $\ps(y\,|\, x,\Dtr)=\cN(y;\mu_s(x),\sigma_s^2(x))$.  The posited source
  and target label marginals used to form the weight are
  $p_s^{\rm lab}(y)=\cN(y;m_s,v_s^2)$ and
  $p_t^{\rm lab}(y)=\cN(y;m_t,v_t^2)$, with $v_s,v_t>0$.  For every tested input $x$,
  \[
    \Lambda(x)\defeq
    \frac{1}{\sigma_s^2(x)}+\frac{1}{v_t^2}-\frac{1}{v_s^2}>0.
  \]
\end{assumption}

\begin{remark}[Positive-precision condition in practice]
  \label{rem:posprec}
  The necessary and sufficient condition for the Gaussian normalization below
  is $\Lambda(x)>0$.  The simpler condition $\sigma_s^2(x)<v_t^2$ is neither
  necessary nor sufficient in general.  If $v_t^2\le v_s^2$, then
  $\Lambda(x)>0$ automatically for every $\sigma_s^2(x)>0$.  If
  $v_t^2>v_s^2$, then the condition becomes
  $\sigma_s^2(x)<v_s^2v_t^2/(v_t^2-v_s^2)$.  When $\Lambda(x)\le0$, the
  untruncated Gaussian-ratio weight does not define a finite normalizer at
  that input; the closed form is not merely unavailable.  One must either
  restrict testing to inputs satisfying $\Lambda(x)>0$, modify or truncate the
  posited weight so that $Z_w(x)<\infty$, or use a different predictive/weight
  pair for which the normalizer is finite.  Numerical quadrature is appropriate
  only after such finiteness has been guaranteed.  The e-process property is
  preserved exactly whenever the same finite $Z_w(x)$ is used in both $\pt$ and
  $e_i$.
\end{remark}

Under \cref{ass:gp}, the posited weight $\ws(y)=(v_s/v_t)\exp\bigl((y-m_s)^2/(2v_s^2)
-(y-m_t)^2/(2v_t^2)\bigr)$ is exponential-quadratic.

\begin{proposition}[Closed-form normalization]
  \label{prop:Zclosed}
  Under \cref{ass:gp}, with
  $\Lambda(x)\defeq 1/\sigma_s^2(x)+1/v_t^2-1/v_s^2$ and
  $\mu^*(x)\defeq\Lambda(x)^{-1}(\mu_s(x)/\sigma_s^2(x)+m_t/v_t^2-m_s/v_s^2)$,
  \[
    \Zx = \eta(x)    \exp\!\Bigl(
      \tfrac{(\mu^*(x))^2\Lambda(x)}{2}
      -\tfrac{\mu_s^2(x)}{2\sigma_s^2(x)}
      -\tfrac{m_t^2}{2v_t^2}
      +\tfrac{m_s^2}{2v_s^2}
    \Bigr),
  \]
  where $\eta(x) =  \frac{v_s}{v_t\,\sigma_s(x)\sqrt{\Lambda(x)}}$.
\end{proposition}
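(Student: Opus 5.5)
The plan is to compute $\Zx=\int \ws(y)\,\cN(y;\mu_s(x),\sigma_s^2(x))\,dy$ directly by completing the square in the exponent. Writing out the integrand, the prefactors are $(v_s/v_t)\cdot(2\pi\sigma_s^2(x))^{-1/2}$. The exponent is the sum of three quadratics in $y$:
\[
-\tfrac{(y-\mu_s)^2}{2\sigma_s^2}+\tfrac{(y-m_s)^2}{2v_s^2}-\tfrac{(y-m_t)^2}{2v_t^2}.
\]
I will expand each term and collect coefficients. The coefficient of $-y^2/2$ is exactly $\Lambda(x)=1/\sigma_s^2+1/v_t^2-1/v_s^2$. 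The coefficient of $y$ is $\mu_s/\sigma_s^2+m_t/v_t^2-m_s/v_s^2=\Lambda(x)\mu^*(x)$. The constant term is $-\mu_s^2/(2\sigma_s^2)-m_t^2/(2v_t^2)+m_s^2/(2v_s^2)$.

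Completing the square gives an exponent of the form
\[
-\tfrac{\Lambda}{2}\,(y-\mu^*)^2+\tfrac{\Lambda(\mu^*)^2}{2}+\text{const}.
\]
The $y$-independent terms are then precisely those appearing in the exponential of the claimed formula.

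The remaining integral is the Gaussian integral $\int \exp(-\Lambda(y-\mu^*)^2/2)\,dy=\sqrt{2\pi/\Lambda}$. This step is where \cref{ass:gp} enters: $\Lambda(x)>0$ is exactly what makes this integral finite. Otherwise the integrand fails to decay and $\Zx=\infty$, consistent with \cref{rem:posprec}. Multiplying, the factor $\sqrt{2\pi/\Lambda}\cdot(2\pi\sigma_s^2)^{-1/2}\cdot v_s/v_t$ equals $\eta(x)=v_s/(v_t\sigma_s(x)\sqrt{\Lambda(x)})$. This yields the stated expression, and it is strictly positive and finite, as required for \cref{lem:lr} to apply.

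There is no real obstacle; the argument is a routine Gaussian computation. The only care needed is sign bookkeeping. The $+(y-m_s)^2/(2v_s^2)$ term enters with a positive sign, so it contributes $-1/v_s^2$ to $\Lambda$ and $-m_s/v_s^2$ to the linear coefficient. I will double-check these signs against the definition of $\mu^*(x)$ before concluding.
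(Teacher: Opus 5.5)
Your proposal is correct and follows the paper's proof essentially step for step. Both substitute the Gaussian predictive and the exponential-quadratic weight, identify the $y^2$ and $y$ coefficients as $\Lambda(x)$ and $\Lambda(x)\mu^*(x)$, complete the square, and use $\Lambda(x)>0$ to evaluate $\int \exp(-\Lambda(x)(y-\mu^*(x))^2/2)\,dy=\sqrt{2\pi/\Lambda(x)}$, which combines with $(v_s/v_t)(2\pi\sigma_s^2(x))^{-1/2}$ to give $\eta(x)$.
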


\emph{Proof.} See \cref{app:proof-prop-Zclosed}.

\begin{proposition}[Closed-form per-sample log e-value]
  \label{prop:logeval}
  Under \cref{ass:gp}, the tilted predictive is
  $\pt(\cdot\,|\, x,\Dtr)=\cN(\mu^*(x),\Lambda(x)^{-1})$, and
  \begin{align*}
    \log\ei
    &= \frac{(y_i-\mu_s(x_i))^2}{2\sigma_s^2(x_i)}
    -\frac{\Lambda(x_i)}{2}(y_i-\mu^*(x_i))^2 \\
    &\quad + \frac{1}{2}\log\!\bigl(\Lambda(x_i)\,\sigma_s^2(x_i)\bigr),
  \end{align*}
  computable from GP posterior outputs $(\mu_s(x_i),\sigma_s^2(x_i))$ and
  the shift parameters $(m_s,v_s^2,m_t,v_t^2)$ alone.
\end{proposition}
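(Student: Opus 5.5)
The plan is to work at the level of log-densities in $y$ for fixed $x$, completing the square once. The result then drops out of Lemma~\ref{lem:lr} and Proposition~\ref{prop:Zclosed} without recomputing any integral.

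\textbf{Step 1: the tilted predictive is Gaussian.} Write
\[
  \log\bigl(\ws(y)\ps(y\,|\,x,\Dtr)\bigr)
\]
as a quadratic in $y$, up to an additive constant. It is
\[
  -\tfrac{1}{2}\bigl(1/\sigma_s^2(x)+1/v_t^2-1/v_s^2\bigr)y^2
  +\bigl(\mu_s(x)/\sigma_s^2(x)+m_t/v_t^2-m_s/v_s^2\bigr)y+\text{const}.
\]
The quadratic coefficient is $-\Lambda(x)/2$, and the linear coefficient is $\Lambda(x)\mu^*(x)$ by the definition of $\mu^*$ in Proposition~\ref{prop:Zclosed}. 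Completing the square gives
\[
  -\tfrac{\Lambda(x)}{2}(y-\mu^*(x))^2 + \text{const}.
\]
Since $\Lambda(x)>0$ by \cref{ass:gp}, this is an unnormalized Gaussian density. Dividing by $\Zx$, which is finite by Proposition~\ref{prop:Zclosed}, yields
\[
  \pt(\cdot\,|\,x,\Dtr)=\cN\bigl(\mu^*(x),\Lambda(x)^{-1}\bigr),
\]
because a normalized density proportional to a Gaussian kernel must equal that Gaussian.

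\textbf{Step 2: the log e-value.} By Lemma~\ref{lem:lr},
\[
  \ei=\pt(y_i\,|\,x_i,\Dtr)/\ps(y_i\,|\,x_i,\Dtr).
\]
This lets me take the difference of two explicit Gaussian log-densities rather than using $\ws(y_i)/\Zx[x_i]$ directly. The Gaussian log-densities are
\[
  \log\pt=-\tfrac{\Lambda}{2}(y_i-\mu^*)^2+\tfrac12\log\Lambda-\tfrac12\log(2\pi),
\]
\[
  \log\ps=-\tfrac{(y_i-\mu_s)^2}{2\sigma_s^2}-\tfrac12\log\sigma_s^2-\tfrac12\log(2\pi).
\]
Subtracting, the $2\pi$ terms cancel, and the log-determinant terms combine into $\tfrac12\log(\Lambda\sigma_s^2)$. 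This is exactly the displayed formula. Every quantity in it depends only on $(\mu_s(x_i),\sigma_s^2(x_i))$ and $(m_s,v_s^2,m_t,v_t^2)$, since $\Lambda$ and $\mu^*$ are built from these alone.

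\textbf{Main obstacle and consistency check.} The only delicate point is Step 1: I must make sure the identification of $\pt$ as that Gaussian is legitimate, and not merely proportional. This rests on $\Lambda(x)>0$, which guarantees integrability and a finite $\Zx$ (Remark~\ref{rem:posprec}). As an optional consistency check, I would compute $\log\ws(y_i)-\log\Zx[x_i]$ directly from the exponential-quadratic form of $\ws$ and Proposition~\ref{prop:Zclosed}. Expanding the squares, the constants $m_t^2/v_t^2$, $m_s^2/v_s^2$ and $\mu_s^2/\sigma_s^2$ should cancel. The prefactor $v_s/v_t$ in $\ws$ should cancel against the same factor in $\eta$, leaving $\sigma_s\sqrt{\Lambda}$, which matches the $\tfrac12\log(\Lambda\sigma_s^2)$ term. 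This confirms the result, but the route via Lemma~\ref{lem:lr} makes it unnecessary.
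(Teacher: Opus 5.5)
Your proposal is correct and follows the paper's proof: complete the square (the same calculation as in Proposition~\ref{prop:Zclosed}) to identify the tilted predictive as $\mathcal{N}(\mu^*(x),\Lambda(x)^{-1})$ using $\Lambda(x)>0$, then subtract the two Gaussian log-densities. In that subtraction the $2\pi$ terms cancel and the normalizing constants combine into $\tfrac12\log(\Lambda(x_i)\sigma_s^2(x_i))$. Your coefficient computations are right, and the optional $w/Z_w$ cross-check is consistent but not needed, as you note.
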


\emph{Proof.} See \cref{app:proof-prop-logeval}.

\begin{remark}[$p(x)$ cancellation]
  Under label shift, $\ws(y)$ does not depend on $x$, so $p_s(x)$ cancels
  from all expressions.  \cref{prop:logeval} inherits this because $\log\ei$
  depends on $x_i$ only through GP posterior outputs $\mu_s(x_i)$ and
  $\sigma_s^2(x_i)$, requiring no knowledge of the feature distribution.
\end{remark}

\subsection{Application to FDR Control and Robustness Sets}
\label{sec:extensions}

When $K$ streams are tested in parallel, applying the test $K$ times
at level $\alpha$ inflates the expected number of false confirmations to
$K\alpha$.  Bonferroni correction is too conservative (it targets
family-wise error).  
Instead, the e-BH procedure of \citet{WangR2022jrsssb} controls the \emph{false discovery rate} (FDR),
the expected fraction of false confirmations among all confirmed streams, at level $\alpha$.
For each compound $k$, compute the batch e-value
\[
  e^{(k)} \defeq M_{n_k}^{(k)}
  = \exp\!\Bigl(\sum_{i=1}^{n_k}
    \bigl(\NLPD_i^{s,(k)} - \NLPD_i^{w,(k)}\bigr)\Bigr),
\]
which is valid under $H_0^{\mathrm{pred},(k)}$ by \cref{prop:eprocess}.
Ordering $e^{(1)}\ge\cdots\ge e^{(K)}$ and setting
$k^* = \max\{k : e^{(k)} \ge K/(\alpha k)\}$, rejecting the top-$k^*$
controls $\mathrm{FDR}\le\alpha$ under independence across streams.  The
threshold $K/(\alpha k)$ relaxes progressively with rank, giving e-BH
more power than Bonferroni.  We recommend the batch e-value as the e-BH
input since it is order-free; using a sequentially stopped value
$M_{\tau^*}^{(k)}$ is valid \citep{RamdasA2023ss} but requires a
careful per-compound protocol.

If the practitioner is uncertain about the exact weight, any fixed
$\ws\in\cW_\beta$ from a robustness set (e.g.\ a $\chi^2$ ball) gives a
valid e-process by \cref{prop:eprocess}.  A mixture e-process
$M_t^{\rm mix}=\int M_t(w)\,d\Pi(w)$ over a prior $\Pi$ chosen before
testing is also valid; designing growth-rate-optimal safe tests for the
composite alternative connects to GRO e-values
\citep{GrunwaldP2024jrsssb} and is left to future work.

\section{Confirmation Test in Practice}
\label{sec:algorithm}

The method requires a strict two-stage separation. 
In the first stage, the practitioner fixes the weight $\ws(y)$ using any available
information \emph{before} the testing stream begins.  This could be an
off-the-shelf label-shift estimator such as BBSE
\citep{LiptonZ2018icml} or MLLS \citep{AlexandariAM2020icml} applied to
a \emph{separate} pilot dataset, domain expert knowledge, a regulatory
specification, or a bridging experiment.  In the second stage, the fixed
$\ws$ is handed to Algorithm~\ref{alg:confirmation}, which accumulates
sequential evidence for or against it on the incoming target stream.  The
validity guarantee (\cref{prop:anytime}) depends critically on this
separation because $\ws$ must not depend on the observations being tested, since
the e-value $e_i = \ws(y_i)/\Zx[x_i]$ is a valid e-value only when $\ws$
is committed to before $y_i$ is observed.  If $\ws$ was estimated on an
independent pilot batch, that batch must be disjoint from the testing
stream.

Algorithm~\ref{alg:confirmation} summarizes the complete procedure.  The
offline phase fixes the source predictive and the posited shift parameters; it
requires no additional estimation from the target testing stream.  The online
phase adds only constant-time arithmetic after the GP predictive
$(\mu_s(x_i),\sigma_s^2(x_i))$ has been evaluated.  The method computes
$\log e_i$ via \cref{prop:logeval} and increments the running log-wealth $S_t$.  By
\eqref{eq:nlpdgap}, $S_t=\log M_t$ is exactly the cumulative NLPD gap,
so practitioners already tracking that gap are implicitly running the test.

\begin{algorithm2e}[t]
\DontPrintSemicolon
\SetAlgoLined
\SetKwInOut{Input}{Inputs}
\SetKwInOut{Output}{Output}
\caption{Anytime-Valid Predictive Confirmation Test for a Posited Label-Shift Tilt}
\label{alg:confirmation}

\Input{%
  Source GP outputs $\mu_s(\cdot),\sigma_s^2(\cdot)$ from $\ps(y\,|\, x,\Dtr)$\;
  Posited shift parameters $(m_s,v_s^2,m_t,v_t^2)$\;
  Significance level $\alpha\in(0,1)$; optional monitoring horizon $T$}
\Output{Rejection time $\tau^*$, or ``not rejected'' after $T$ observations}

\BlankLine
\tcp{\textit{\footnotesize Offline stage. Commit to the source predictive and the posited tilt}}
Fix $(m_s,v_s^2,m_t,v_t^2)$ and set log-wealth $S_0\leftarrow 0$\;

\BlankLine
\tcp{\textit{\footnotesize Online stage. Process target observations sequentially}}
\For{$i = 1, 2, \ldots, T$}{
  Receive $(x_i, y_i)$\;
  Evaluate
  $\Lambda_i\leftarrow 1/\sigma_s^2(x_i)+1/v_t^2-1/v_s^2$ and
  $\mu_i^*\leftarrow\Lambda_i^{-1}\!\bigl(\mu_s(x_i)/\sigma_s^2(x_i)+m_t/v_t^2-m_s/v_s^2\bigr)$\;
  \If{$\Lambda_i\le0$}{\Return ``invalid Gaussian tilt at $x_i$; modify or truncate $w$ so that $Z_w(x_i)<\infty$''}
  Compute log e-value using \cref{prop:logeval}
  $\displaystyle\log e_i \leftarrow
    \frac{(y_i-\mu_s(x_i))^2}{2\sigma_s^2(x_i)}
    - \frac{\Lambda_i}{2}(y_i-\mu_i^*)^2
    + \frac{1}{2}\log\!\bigl(\Lambda_i\,\sigma_s^2(x_i)\bigr)$\;
  Update log-wealth, $S_i \leftarrow S_{i-1} + \log e_i$
    \tcp*{\footnotesize $S_i = \log M_i$ by \eqref{eq:nlpdgap}}
  \lIf{$S_i > \log(1/\alpha)$}{\Return $\tau^*\leftarrow i$
    \tcp*{\textit{\footnotesize tilted predictive confirmed (\cref{prop:anytime})}}}
}
\Return ``not rejected after $T$ observations''
\end{algorithm2e}

\if(0)
\paragraph{Computational cost.}
Each iteration requires $O(1)$ arithmetic given GP outputs
$(\mu_s(x_i),\sigma_s^2(x_i))$.  GP posterior evaluation is $O(n)$ per
prediction after $O(n^2)$ training; the confirmation test adds no
asymptotic overhead.  For the parallel FDR setting (\cref{sec:extensions}),
the loop runs independently per compound and e-BH post-processing costs
$O(K\log K)$.
\fi

\section{Experiments}
\label{sec:experiments}

We illustrate the method through end-to-end GP regression simulations that
follow the intended two-stage workflow.  We train a source GP on source data,
specify a label-shift correction, and accumulate closed-form e-values on
target observations.  This design tests the full pipeline, including GP
posterior uncertainty propagation into the e-values.

In each trial we generate a synthetic source training set
\[
  y_j^{\rm tr}=f(x_j^{\rm tr})+\epsilon_j,
\]
where $x_j^{\rm tr}\sim\mathrm{Unif}[-3,3]$ and $\epsilon_j\sim\cN(0,\sigma_\epsilon^2)$,
with $f(x)=\sin(2x)+0.5\cos(4x)$.
We fit a GP with an RBF kernel to
$\Dtr=\{(x_j^{\rm tr},y_j^{\rm tr})\}_{j=1}^{n_{\rm tr}}$.
The GP posterior predictive
\[
  \ps(y\mid x,\Dtr)=\cN(y;\mu_s(x),\sigma_s^2(x))
\]
serves as the source predictive.  Unless stated otherwise, $n_{\rm tr}=50$,
$\sigma_\epsilon^2=1$, RBF amplitude $1$ and lengthscale $0.8$,
$\alpha=0.05$.  The posited label marginals are
$p_s^{\rm lab}(y)=\cN(y;0,2)$ and $p_t^{\rm lab}(y)=\cN(y;m_t,3)$, and
shift magnitude is $\Delta=(m_t-m_s)/\sqrt{v_s^2}$.  Under $H_0^{\rm pred}$,
target labels are drawn from $\ps(\cdot\mid x,\Dtr)$; under the alternative,
from $\pt(\cdot\mid x,\Dtr)$.

\paragraph{Experiment 1. E-process trajectories.}
\cref{fig:exp1} shows median $\log M_t$ with IQR bands across 500 trials
for one null stream and for alternatives with $\Delta\in\{0.5,1.0,1.5\}$.
Under $H_0^{\rm pred}$, log-wealth fluctuates around zero and stays well
below the rejection boundary $\log(1/\alpha)\approx3$.  Under the alternative, log-wealth
grows approximately linearly and the empirical slope closely tracks the
conditional KL rate $\gamma$ predicted by \cref{prop:power}; the slope
increases with $\Delta$.  This confirms that the growth-rate
characterization holds in the end-to-end GP setting, where $\mu_s(x)$
and $\sigma_s^2(x)$ come from a fitted model rather than from a
theoretical oracle.

\begin{figure}[ht]
  \centering
  \includegraphics[width=\columnwidth]{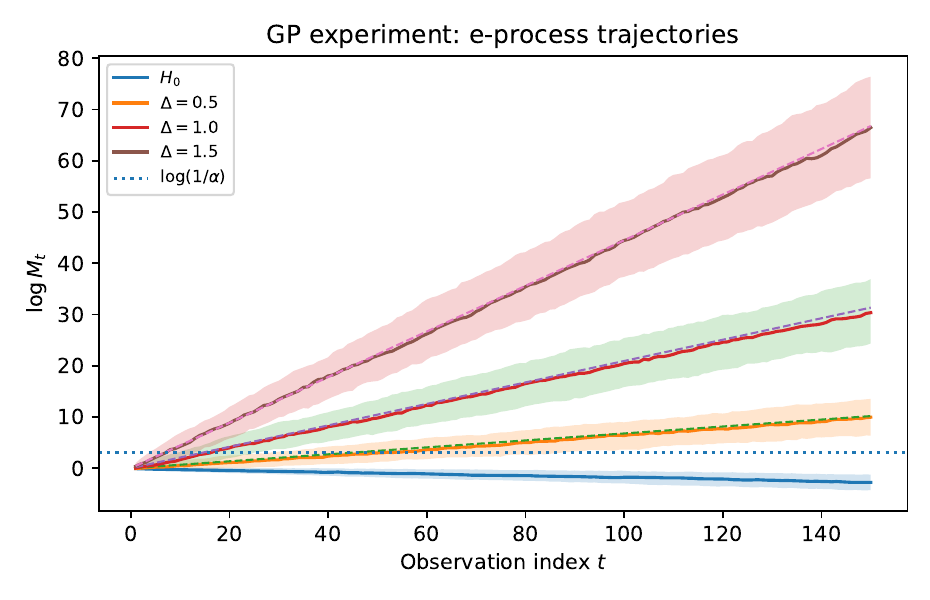}
  \caption{E-process trajectories under end-to-end GP regression
    ($n_{\rm tr}=50$, 500 trials).  Each trial trains a GP on source data
    and computes closed-form e-values from the GP posterior.  Dashed lines
    show the KL growth rate predicted by \cref{prop:power}; the dotted
    line is $\log(1/\alpha)$.}
  \label{fig:exp1}
\end{figure}

\paragraph{Experiment 2. Type I error under the predictive null.}
\cref{fig:exp2} plots the empirical false-alarm probability
$\widehat\alpha(T)=\Prob_{H_0^{\rm pred}}(\tau^*\le T)$ over 2000 trials
when target labels are drawn from the fitted source GP predictive.  The
curve stays strictly below $\alpha=0.05$ at every displayed horizon,
confirming the anytime-valid guarantee of \cref{prop:anytime} in the
end-to-end GP setting.  The overall false-alarm rate at $T=200$ is
$3.25\%$, demonstrating that the closed-form e-values are conservative in
this GP regression setup.

\begin{figure}[ht]
  \centering
  \includegraphics[width=\columnwidth]{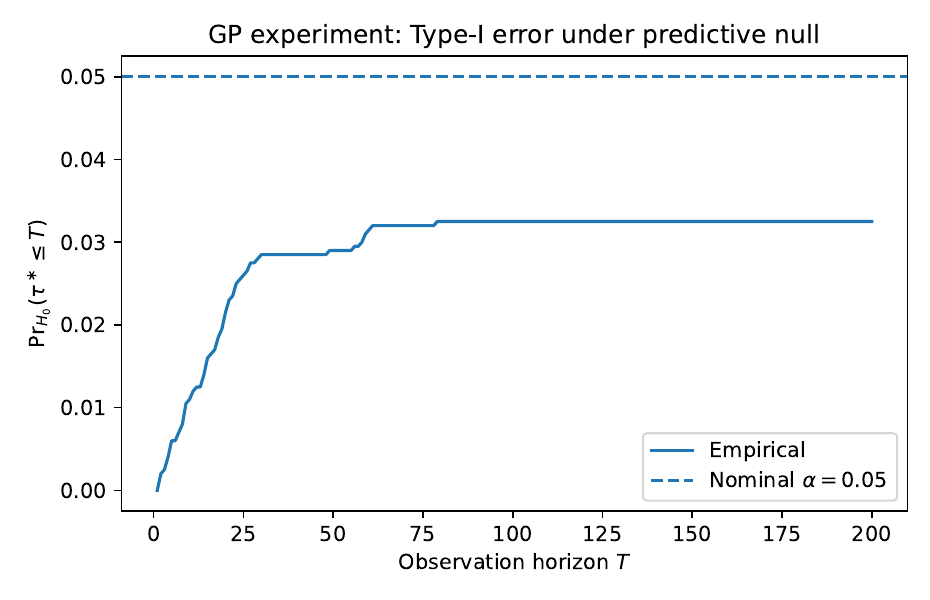}
  \caption{Type I error under $H_0^{\rm pred}$ ($n_{\rm tr}=50$, 2000
    trials).  The empirical false-alarm probability $\widehat\alpha(T)$
    stays below $\alpha=0.05$ (dashed) at every horizon, confirming the
    anytime-valid guarantee with a fitted GP source predictive.}
  \label{fig:exp2}
\end{figure}

\paragraph{Experiment 3. Power vs fixed-sample LRT.}
\cref{fig:exp3} compares the anytime-valid e-process with a fixed-sample
likelihood-ratio test (LRT) calibrated separately at each horizon under
$H_0^{\rm pred}$, at $\Delta=1.0$ (1500 trials).  The
fixed-sample LRT knows the sample size in advance and is optimally
calibrated at each horizon; the e-process does not.  Accordingly, the
LRT has a power advantage at large horizons.  At $n=50$, power is $0.993$
for the LRT and $0.963$ for the e-process.  However, the e-process
provides validity under \emph{arbitrary} stopping.  Its $\alpha$ guarantee
holds regardless of when the practitioner stops, with no recalibration.
The LRT, by contrast, is invalid if the practitioner stops early or
peeks at interim results.  This is the fundamental trade-off.  The
e-process sacrifices a small amount of power at any fixed horizon in
exchange for the freedom to stop at any time.

\begin{figure}[ht]
  \centering
  \includegraphics[width=\columnwidth]{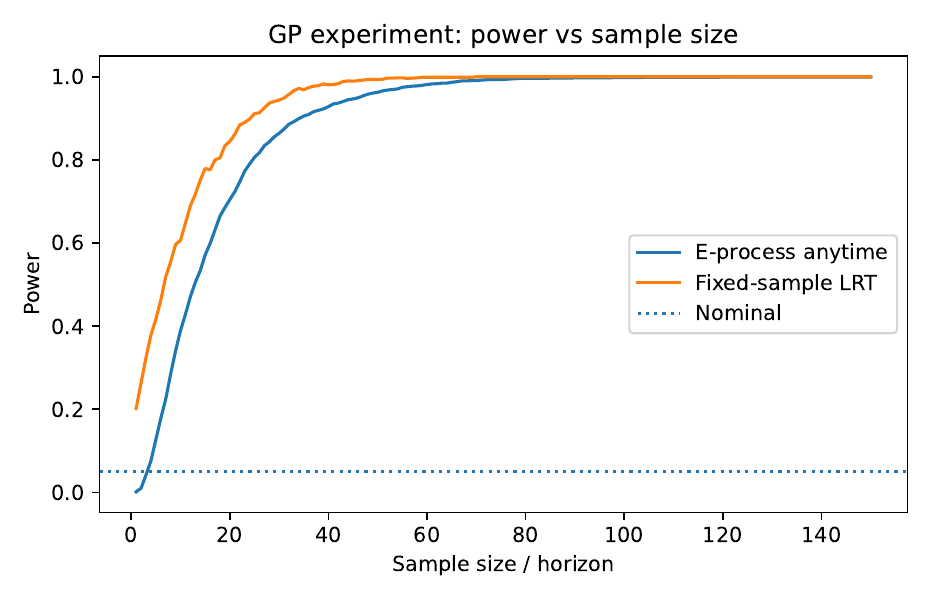}
  \caption{Power vs sample size at $\Delta=1.0$ ($n_{\rm tr}=50$,
    1500 trials).  The fixed-sample LRT is calibrated separately at each
    horizon and is valid only at the pre-specified $n$; the e-process is
    valid under any stopping time.  The power gap reflects the cost of
    anytime-validity.}
  \label{fig:exp3}
\end{figure}

\paragraph{Experiment 4. Reliable prior shift vs adaptive re-estimation.}
For each target batch size $n_t$, we compare three predictives on a
held-out target set.  The three predictives are (i)~the uncorrected source
GP, (ii)~the oracle corrected predictive using the true shift weight, and
(iii)~an adaptive correction that estimates $m_t$ from the $n_t$ target
observations and applies the plug-in tilt.  \cref{fig:exp4} shows that
the oracle correction gives substantially lower held-out NLPD than the
source predictive at all $n_t$, while re-estimation is competitive only
for larger batches.  At $n_t=10$, mean NLPD is $1.794$ (source), $1.586$
(oracle correction), and $1.652$ (re-estimated), confirming the
small-batch advantage of a reliable prior specification.  This motivates
the confirmation test.  When a trustworthy prior tilt is available, using
it beats noisy re-estimation, and the confirmation test provides formal
evidence that the tilt is consistent with the data.

\begin{figure}[ht]
  \centering
  \includegraphics[width=\columnwidth]{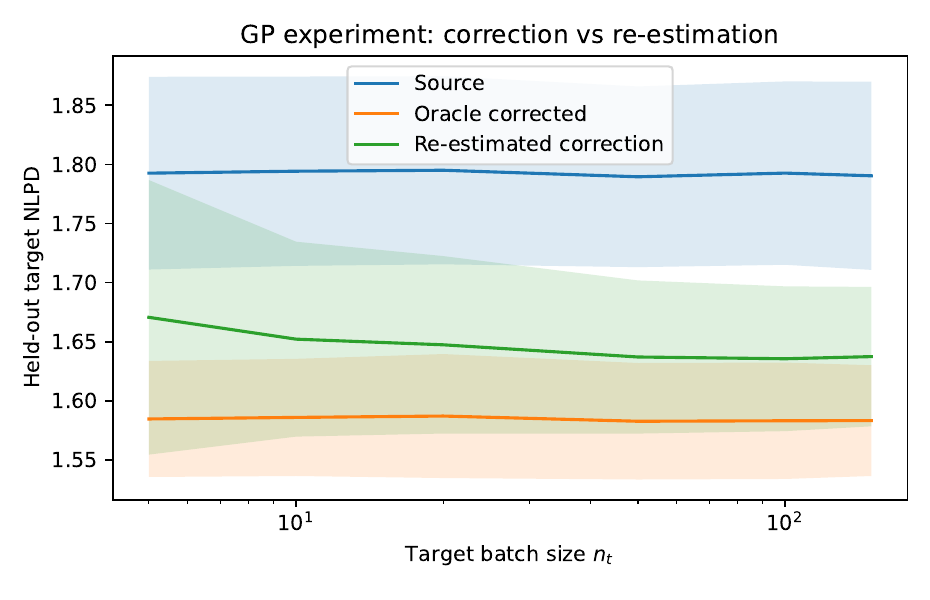}
  \caption{Oracle correction vs adaptive re-estimation under end-to-end
    GP regression.  Held-out target NLPD vs target batch size $n_t$.  A
    reliable prior tilt dominates at small $n_t$; re-estimation improves
    as $n_t$ grows but remains noisier in the small-batch regime.}
  \label{fig:exp4}
\end{figure}

\paragraph{Experiment 5. Source miscalibration stress test.}
The predictive-null guarantee requires observations to follow the source
predictive.  To stress-test this, we generate null labels from
$\cN(\mu_s(x),\,c\,\sigma_s^2(x))$ but compute e-values using the
original GP predictive $\cN(\mu_s(x),\sigma_s^2(x))$, simulating a GP
whose posterior variance is off by a factor $c$.  \cref{fig:exp5} shows
that at $c=1$ (correct calibration) Type I error is $3.2\%$, well below
$\alpha=0.05$.  Variance underestimation ($c>1$) is severely
anti-conservative.  Type I error rises to $36.45\%$ at $c=2$ and
$90.65\%$ at $c=4$.  Variance overestimation ($c<1$) is conservative.
This confirms that the predictive-null caveat is operationally
significant because a poorly calibrated GP posterior can substantially inflate
false-alarm rates, and practitioners should verify GP calibration before
deployment.

\begin{figure}[ht]
  \centering
  \includegraphics[width=\columnwidth]{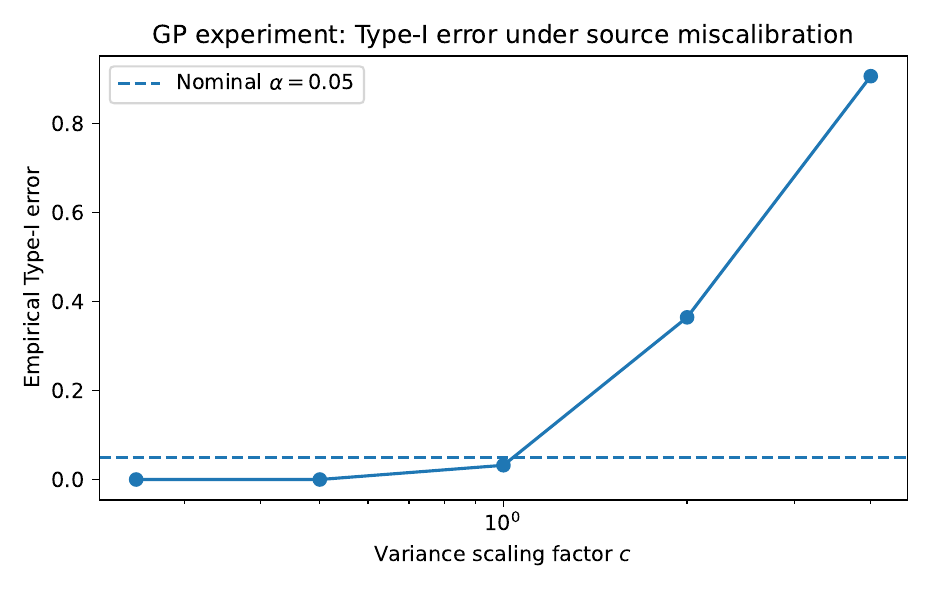}
  \caption{Type I error under GP posterior miscalibration.  Null labels
    are drawn from $\cN(\mu_s(x),c\sigma_s^2(x))$ but e-values use the
    original GP predictive ($c=1$).  Variance underestimation ($c>1$) inflates Type I error sharply;
    $c=1$ is calibrated, while $c<1$ is conservative.}
  \label{fig:exp5}
\end{figure}

Order-sensitivity and batch-e-value comparison experiments are omitted
for space; results are consistent with the theoretical analysis in
\cref{rem:misspec}.

\paragraph{Experiment 6. Misspecified tilt sensitivity.}
The previous experiments assume the posited shift weight is correctly
specified.  Here we ask what happens when the practitioner's prior is
wrong.  We generate target labels from the correctly tilted predictive
at $\Delta=1.0$, but compute e-values using a posited target
mean offset by $\delta\in\{-1.0,-0.5,0,+0.5,+1.0\}$ (in units of
$\sqrt{v_s^2}$, 1000 trials).  \cref{fig:exp6a} shows power curves and
\cref{fig:exp6b} shows mean $\log M_{50}$.

Two findings confirm the theoretical asymmetry.  First, moderate
misspecification reduces power, especially when the posited shift
undershoots the true shift, but the test remains informative and power
improves with batch size.  Second, and more importantly, tilt
misspecification cannot inflate Type I error under the predictive null
as long as the weight is fixed before testing and $Z_w(x_i)$ is finite.
\cref{prop:evalue,prop:eprocess} still give a valid e-process.  This
asymmetry is the key practical reassurance for practitioners who must
specify $w(y)$ from imperfect prior knowledge.  Validity is robust to the
posited tilt, while power degrades with the magnitude and direction of the
error.

\begin{figure}[ht]
  \centering
  \includegraphics[width=\columnwidth]{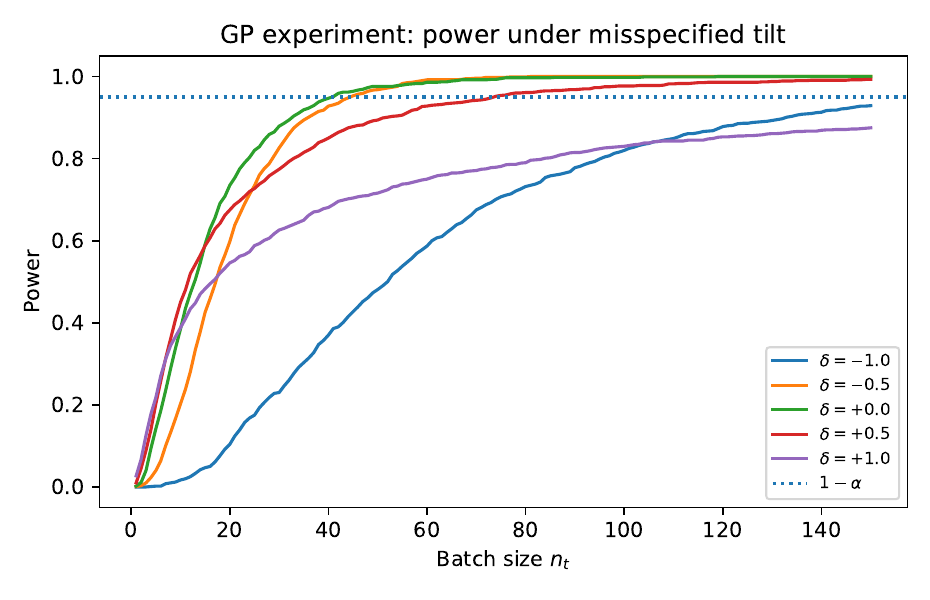}
  \caption{Power under misspecified tilt ($n_{\rm tr}=50$, 1000 trials,
    $\Delta=1.0$).  The true target is generated from the
    correctly tilted predictive; the test uses a posited mean offset by
    $\delta$.  Moderate misspecification reduces power but does not break
    the test.}
  \label{fig:exp6a}
\end{figure}

\begin{figure}[ht]
  \centering
  \includegraphics[width=\columnwidth]{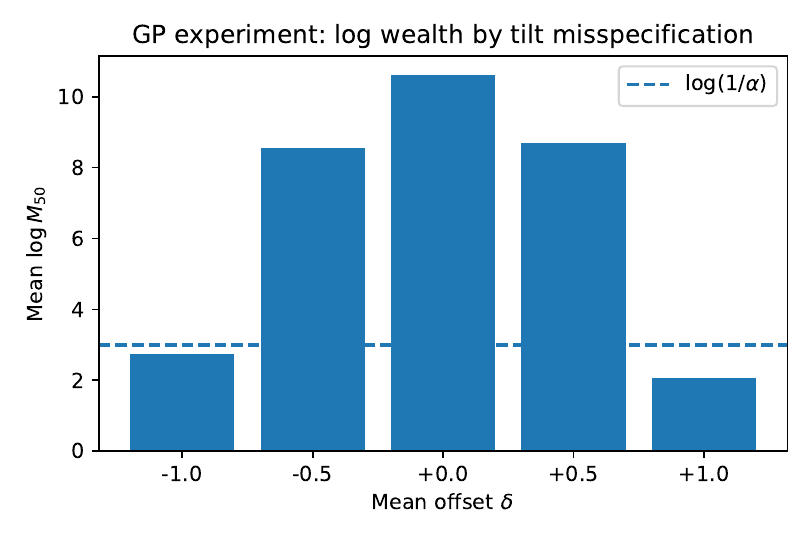}
  \caption{Mean $\log M_{50}$ under misspecified tilt.  The dashed line
    is the rejection threshold $\log(1/\alpha)$.  All offsets $\delta$
    yield positive mean log-wealth, although severe overstatement of the
    shift may fall below the rejection threshold at $n_t=50$.}
  \label{fig:exp6b}
\end{figure}

\section{Conclusion}
\label{sec:conclusion}

When labeled target outcomes are scarce and unlabeled target data alone
are insufficient for reliable shift estimation, re-estimating a label
shift from scratch is unreliable.  This paper provides a formal tool for the complementary
setting in which the practitioner has a domain-knowledge-based prior on the
shift direction and needs statistical evidence that it is consistent
with the incoming data.  We showed that the cumulative NLPD gap between
a fixed source predictive and a pre-specified tilted predictive is the
logarithm of a valid e-process under the source predictive null.
The normalized tilted/source likelihood ratios are conditional e-values,
their product is a nonnegative martingale, and Ville's inequality yields
an anytime-valid confirmation rule with no pre-specified sample size.
For GP sources with Gaussian label-shift tilts, e-values are available
in closed form.  Rejection confirms that the posited shift is
directionally supported by the data, not that it is exactly correct;
for quantitative shift estimation from target covariates,
unlabeled-target adaptation methods such as BBSE remain the appropriate
tool.

The main limitation is predictive-null validity.
Type I control holds when the source predictive is correctly calibrated,
not under arbitrary model misspecification.  Experiment~5 shows this is
operationally significant.  Variance underestimation by a factor of 2
inflates the false-alarm rate to $36.45\%$.  Practitioners should verify
GP calibration before deployment.  The candidate weight must also be
fixed before the testing stream begins.  Extensions to distribution-free
validity via conformalization, and to composite or misspecified-shift
families via mixture e-processes \citep{GrunwaldP2024jrsssb}, are
natural future directions.

\bibliographystyle{icml2026}
\bibliography{/users/seungjin/pub/bib/sjc}

\newpage
\onecolumn
\appendix

\section{Proofs}
\label{app:proofs}

\subsection{Proof of \cref{lem:lr}}
\label{app:proof-lem-lr}

\begin{proof}
For fixed $x$ with $0<Z_w(x)<\infty$, the tilted predictive satisfies
\[
  \Prob_w(dy\mid x,\Dtr)=\frac{w(y)}{Z_w(x)}\,\Prob_s(dy\mid x,\Dtr).
\]
Thus $\Prob_w(\cdot\mid x,\Dtr)$ is absolutely continuous with respect to
$\Prob_s(\cdot\mid x,\Dtr)$ and its Radon--Nikodym derivative is
\[
  \frac{d\Prob_w(\cdot\mid x,\Dtr)}{d\Prob_s(\cdot\mid x,\Dtr)}(y)
  =\frac{w(y)}{Z_w(x)}
  \qquad \Prob_s(\cdot\mid x,\Dtr)\text{-a.s.}
\]
In density notation this is
\[
  \frac{\pt(y\mid x,\Dtr)}{\ps(y\mid x,\Dtr)}=\frac{w(y)}{Z_w(x)}
\]
wherever the density ratio is defined.  Finally,
\[
  \E_{Y\sim\ps(\cdot\mid x,\Dtr)}\left[\frac{w(Y)}{Z_w(x)}\right]
  =\frac{1}{Z_w(x)}\int w(y)\ps(y\mid x,\Dtr)dy=1,
\]
by the definition of $Z_w(x)$.
\end{proof}

\subsection{Proof of \cref{prop:evalue}}
\label{app:proof-prop-evalue}

\begin{proof}
Condition on $(\Fil_{i-1},X_i)$.  Under $H_0^{\rm pred}$, the
conditional distribution of $Y_i$ is $\ps(\cdot\mid X_i,\Dtr)$.  Therefore, by
\cref{lem:lr},
\[
  \E[e_i\mid\Fil_{i-1},X_i]
  =\E_{Y\sim\ps(\cdot\mid X_i,\Dtr)}
    \left[\frac{w(Y)}{Z_w(X_i)}\right]
  =1.
\]
Since $e_i\ge0$, this proves that $e_i$ is a conditional e-value given the
realized input $X_i$.
\end{proof}

\subsection{Proof of \cref{prop:eprocess}}
\label{app:proof-thm-eprocess}

\begin{proof}
Non-negativity is immediate because each $e_i\ge0$.  Moreover, by
\cref{prop:evalue},
\[
  \E[e_t\mid\Fil_{t-1},X_t]=1.
\]
Thus $e_t$ is conditionally integrable, and by the tower property,
\[
  \E[e_t\mid\Fil_{t-1}]
  =\E\!\left[\E[e_t\mid\Fil_{t-1},X_t]\mid\Fil_{t-1}\right]
  =\E[1\mid\Fil_{t-1}]=1.
\]
Since $M_{t-1}$ is $\Fil_{t-1}$-measurable,
\[
  \E[M_t\mid\Fil_{t-1}]
  =\E[M_{t-1}e_t\mid\Fil_{t-1}]
  =M_{t-1}\E[e_t\mid\Fil_{t-1}]
  =M_{t-1}.
\]
Therefore $(M_t)_{t\ge0}$ is a nonnegative martingale with $M_0=1$, and
hence an e-process under $H_0^{\rm pred}$.
\end{proof}

\subsection{Proof of \cref{prop:anytime}}
\label{app:proof-thm-anytime}

\begin{proof}
By \cref{prop:eprocess}, $(M_t)_{t\ge0}$ is a nonnegative martingale under
$H_0^{\rm pred}$ with $M_0=1$.  
Ville's inequality gives
\[
  \Prob_{H_0^{\rm pred}}\!\left(\sup_{t\ge0}M_t>\frac{1}{\alpha}\right)
  \le \alpha.
\]
The event $\{\tau^*<\infty\}$ is contained in the event
$\{\sup_{t\ge0}M_t>1/\alpha\}$, so
\[
  \Prob_{H_0^{\rm pred}}(\tau^*<\infty)\le\alpha.
\]
\end{proof}

\subsection{Proof of \cref{prop:power}}
\label{app:proof-prop-power}

\begin{proof}
Condition on the fixed training data $\Dtr$.  Under the assumption
$(X_i,Y_i)\iid \nu(dx)\pt(dy\mid x,\Dtr)$, the variables $\log e_i$ are
i.i.d. and integrable.  For any fixed $x$,
\begin{align*}
  \E_{\pt(\cdot\mid x,\Dtr)}[\log e_i]
  &=\int \pt(y\mid x,\Dtr)
      \log\frac{\pt(y\mid x,\Dtr)}{\ps(y\mid x,\Dtr)}dy \\
  &=\KL\{\pt(\cdot\mid x,\Dtr)\|\ps(\cdot\mid x,\Dtr)\}.
\end{align*}
Taking expectation over $X\sim\nu$ gives
\[
  \E[\log e_i]=
  \E_{X\sim\nu}\left[
    \KL\{\pt(\cdot\mid X,\Dtr)\|\ps(\cdot\mid X,\Dtr)\}
  \right]=\gamma.
\]
The strong law of large numbers yields
\[
  \frac{1}{t}\log M_t
  =\frac{1}{t}\sum_{i=1}^t\log e_i
  \to \gamma
  \qquad \text{a.s.}
\]
If $\gamma>0$, then $\log M_t\to\infty$ almost surely.  Hence the fixed
threshold $\log(1/\alpha)$ is eventually crossed with probability one, so
$\Prob_{\pt}(\tau^*<\infty)=1$.
\end{proof}

\subsection{Proof of \cref{prop:Zclosed}}
\label{app:proof-prop-Zclosed}

\begin{proof}
We compute $Z_w(x)=\int w(y)\ps(y\mid x,\Dtr)dy$ by substituting
\cref{ass:gp}.
\[
  Z_w(x)=\int_{-\infty}^{\infty}
        \frac{v_s}{v_t}
        \exp\!\left(\frac{(y-m_s)^2}{2v_s^2}-\frac{(y-m_t)^2}{2v_t^2}\right)
        \frac{1}{\sqrt{2\pi}\sigma_s(x)}
        \exp\!\left(-\frac{(y-\mu_s(x))^2}{2\sigma_s^2(x)}\right)dy.
\]
The exponent can be written as
\[
  -\frac{\Lambda(x)}{2}y^2+y\left\{\frac{\mu_s(x)}{\sigma_s^2(x)}
  +\frac{m_t}{v_t^2}-\frac{m_s}{v_s^2}\right\}+C_0,
\]
where
\[
  C_0=-\frac{\mu_s^2(x)}{2\sigma_s^2(x)}-\frac{m_t^2}{2v_t^2}
      +\frac{m_s^2}{2v_s^2}.
\]
By definition,
\[
  \mu^*(x)=\Lambda(x)^{-1}\left\{\frac{\mu_s(x)}{\sigma_s^2(x)}
  +\frac{m_t}{v_t^2}-\frac{m_s}{v_s^2}\right\},
\]
so the exponent is
\[
  -\frac{\Lambda(x)}{2}\{y-\mu^*(x)\}^2
  +C_0+\frac{\Lambda(x)(\mu^*(x))^2}{2}.
\]
Since $\Lambda(x)>0$ by \cref{ass:gp},
\[
  \int_{-\infty}^{\infty}
  \exp\!\left[-\frac{\Lambda(x)}{2}\{y-\mu^*(x)\}^2\right]dy
  =\sqrt{\frac{2\pi}{\Lambda(x)}}.
\]
Combining the prefactors gives
\[
  Z_w(x)=\frac{v_s}{v_t\sigma_s(x)\sqrt{\Lambda(x)}}
  \exp\!\left(
      \frac{\Lambda(x)(\mu^*(x))^2}{2}
      -\frac{\mu_s^2(x)}{2\sigma_s^2(x)}
      -\frac{m_t^2}{2v_t^2}
      +\frac{m_s^2}{2v_s^2}
  \right),
\]
which is the stated expression.
\end{proof}

\subsection{Proof of \cref{prop:logeval}}
\label{app:proof-prop-logeval}

\begin{proof}
The completing-the-square calculation in the proof of \cref{prop:Zclosed}
shows that the unnormalized tilted density is proportional to
\[
  \exp\!\left[-\frac{\Lambda(x)}{2}\{y-\mu^*(x)\}^2\right].
\]
After normalization, therefore,
\[
  \pt(\cdot\mid x,\Dtr)=\cN(\mu^*(x),\Lambda(x)^{-1}).
\]
At $(x_i,y_i)$,
\begin{align*}
  \log\pt(y_i\mid x_i,\Dtr)
  &=-\frac{\Lambda(x_i)}{2}\{y_i-\mu^*(x_i)\}^2
    -\frac{1}{2}\log\{2\pi\Lambda(x_i)^{-1}\},\\
  \log\ps(y_i\mid x_i,\Dtr)
  &=-\frac{\{y_i-\mu_s(x_i)\}^2}{2\sigma_s^2(x_i)}
    -\frac{1}{2}\log\{2\pi\sigma_s^2(x_i)\}.
\end{align*}
Subtracting the second display from the first yields
\[
  \log e_i
  =\frac{\{y_i-\mu_s(x_i)\}^2}{2\sigma_s^2(x_i)}
  -\frac{\Lambda(x_i)}{2}\{y_i-\mu^*(x_i)\}^2
  +\frac{1}{2}\log\{\Lambda(x_i)\sigma_s^2(x_i)\},
\]
which proves the stated formula.
\end{proof}

\end{document}